\DeclareMathOperator*{\argmax}{argmax}
\DeclareMathOperator*{\diag}{diag}
\DeclareMathOperator{\tr}{tr}
\newtheorem{theorem}{Theorem}
\newtheorem{lemma}{Lemma}
\newtheorem{definition}{Definition}
\theoremstyle{nonumberplain}
\newtheorem{proof}{Proof}
\begin{document}
 
%
\title{Block-Wise MAP Inference for Determinantal Point Processes\\with Application to Change-Point Detection}
\author{Jinye Zhang \and Zhijian Ou\\
Speech Processing and Machine Intelligence Laboratory\\
Tsinghua University, Beijing, China, 100085\\
}
\nocopyright
\frenchspacing
\maketitle
\begin{abstract}
\begin{quote}
Existing MAP inference algorithms for determinantal point processes (DPPs) need to calculate determinants or conduct eigenvalue decomposition generally at the scale of the full kernel, which presents a great challenge for real-world applications. In this paper, we introduce a class of DPPs, called BwDPPs, that are characterized by an almost block diagonal kernel matrix and thus can allow efficient block-wise MAP inference. Furthermore, BwDPPs are successfully applied to address the difficulty of selecting change-points in the problem of change-point detection (CPD), which results in a new BwDPP-based CPD method, named BwDppCpd. In BwDppCpd, a preliminary set of change-point candidates is first created based on existing well-studied metrics. Then, these change-point candidates are treated as DPP items, and DPP-based subset selection is conducted to give the final estimate of the change-points that favours both quality and diversity. The effectiveness of BwDppCpd is demonstrated through extensive experiments on five real-world datasets. 
\end{quote}
\end{abstract}

\section{Introduction}
The determinantal point processes (DPPs) are elegant probabilistic models for subset selection problems where both quality and diversity are considered. Formally, given a set of items $\mathcal{Y}=\{1,\cdots,N\}$, a DPP defines a probability measure $\mathcal{P}$ on $2^{\mathcal{Y}}$, the set of all subsets of $\mathcal{Y}$. For every subset $Y \subseteq \mathcal{Y}$ we have 
\begin{equation}
\mathcal{P}_{\mathbf{L}}(Y)\propto\det({\mathbf{L}_Y}),
\end{equation}
where the L-ensemble kernel $\mathbf{L}$ is an $N$ by $N$ positive semi-definite matrix. By writing $\mathbf{L}=\mathbf{B}^T \mathbf{B}$ as a Gram matrix, $\det({\mathbf{L}_Y})$ could be viewed as the squared volume spanned by the column vectors $\mathbf{B}_i$ for $i\in Y$. By defining $\mathbf{B}_i=q_i \bm{\phi}_i$, a popular decomposition of the kernel is given as
\begin{equation}
L_{ij}=q_i \bm{\phi}_i^T \bm{\phi}_j q_j,
\end{equation}
where $q_i\in \mathbb{R}^+$ measures the quality (magnitude) of item $i$ in $\mathcal{Y}$, and $\bm{\phi}_i\in \mathbb{R}^k$, $\Vert \bm{\phi}_i\Vert=1$ can be viewed as the angle vector of diversity features so that $\bm{\phi}_i^T \bm{\phi}_j$ measures the similarity between items $i$ and $j$. It can be shown that the probability of including $i$ and $j$ increases with the quality of $i$ and $j$ and diversity between $i$ and $j$. As a result, a DPP assigns high probability to subsets that are both of good quality and diverse \cite{kulesza2012determinantal}.

\begin{figure}
\centering
\subfigure[~]{\includegraphics[width=5cm]{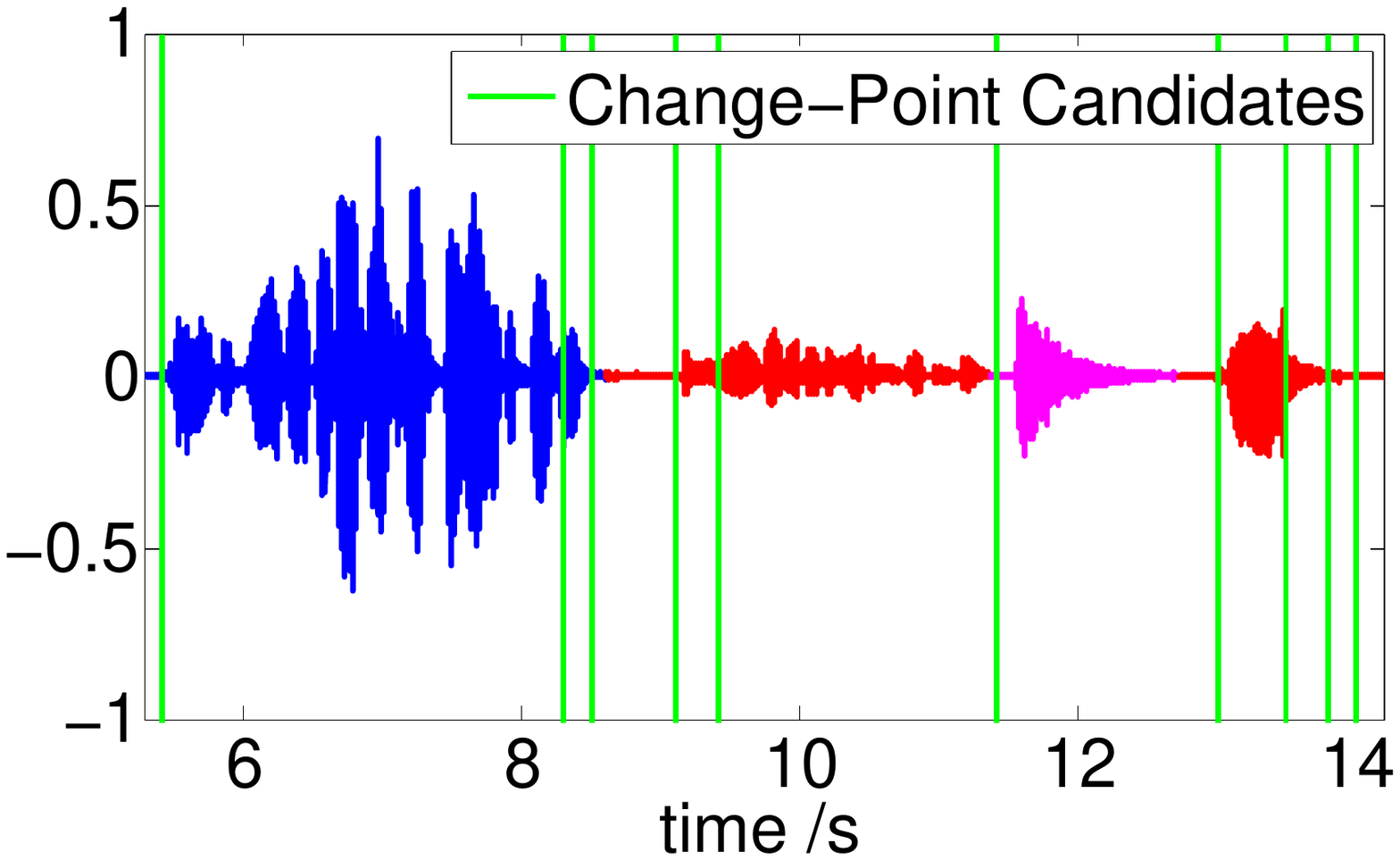}} 
\subfigure[~]{\includegraphics[width=3cm]{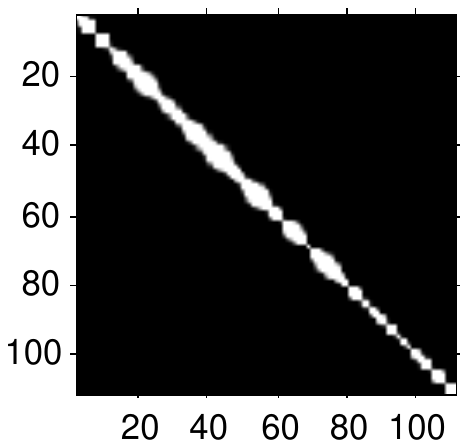}}
\caption{(a) A 10-sec part of a 2-min speech recording, shown with change-point candidates. Segments of different speakers or noises are plotted in different colors. (b) BwDPP kernel constructed for the whole 2-min recording, with the 112 change-point candidates as BwDPP items. The white denotes non-zero entries while the black indicates zero.}
\label{fig: TelData}
\end{figure}

For DPPs, the \emph{maximum a posteriori} (MAP) problem $\argmax_{Y \subseteq \mathcal{Y}}  \det(\mathbf{L}_Y)$, aiming at finding the subset with highest probability, has attracted much attention due to its broad range for potential applications. Noting that this is an NP-hard problem \cite{ko1995exact}, a number of approximate inference methods have been purposed, including the greedy methods for optimizing the submodular function $\log \det(L_Y)$ \cite{buchbinder2012tight,nemhauser1978analysis}, optimization via continuous relaxation \cite{gillenwater2012near}, and minimum Bayes risk decoding that minimizes the application-specific loss function \cite{kulesza2012determinantal}. 

These existing methods need to calculate determinants or conduct eigenvalue decomposition. Both computations are taken at the scale of the kernel size $N$ and with the cost of around $\mathcal{O}(N^3)$ time that become intolerably high when $N$ become large, e.g. thousands. Nevertheless, we find that for a class of DPPs where the kernel is almost block diagonal (Fig. \ref{fig: TelData} (b)), the MAP inference with the whole kernel could be replaced by a series of sub-inferences with its sub-kernels. Since the sizes of the sub-kernels become smaller, the overall computational cost can be significantly reduced. Such DPPs are often defined over a line where items are only similar to their neighbourhoods on the line and significantly different from those far away. Since the MAP inference for such DPPs is conducted in a block-wise manner, we refer to them as BwDPPs (block-wise DPPs) in the rest of the paper.

The above observation is mainly motivated by the problem of change-point detection (CPD) that aims at detecting abrupt changes in time-series data \cite{gustafsson2000adaptive}. In CPD, the period of time between two consecutive change-points, often referred to as a segment or a state, is with homogeneous properties of interest (e.g. the same speaker in a speech \cite{chen1998speaker} or the same behaviour in human activity data \cite{liu2013change}). After choosing a number of change-point candidates without much difficulty, we can treat these change-point candidates as DPP items, and select a subset from them to be our final estimate of the change-points. Each change-point candidate has its own quality of being a change-point. Moreover, the true locations of change-points along the timeline tend to be diverse, since states (e.g. speakers in Fig. \ref{fig: TelData} (a)) would not change rapidly. Therefore, it is preferred to conduct change-point selection that incorporates both quality and diversity. DPP-based subset selection clearly suits this purpose well. Meanwhile, the corresponding kernel will then become almost block diagonal (e.g. Fig. \ref{fig: TelData} (b)), as neighbouring items are less diversified, and items far apart more diversified, In this case, the DPP becomes BwDPP.

The problem of CPD have been actively studied for decades, where various CPD methods could be broadly classified into Bayesian or frequentist approach. In Bayesian approach, the CPD problem is reduced to estimating the posterior distribution of the change-point locations given the time-series data \cite{green1995reversible}. Other posteriors to be estimated include the 0/1 indicator sequence \cite{lavielle2001application}, and the ``run length" \cite{adams2007bayesian}. Although many improvements were made, e.g. using advanced Monte Carlo method, the efficiency for estimating these posteriors is still a big challenge for real-world tasks.


In frequentist approach, the core idea is hypothesis testing and the general strategy is to first define a metric (test statistic) by considering the observations over past and present windows. As both windows move forward, change-points are selected when the metric value exceeds a threshold. Some widely-used metrics include the cumulative sum \cite{basseville1993detection}, the generalized likelihood-ratio \cite{gustafsson1996marginalized}, the Bayesian information criterion (BIC) \cite{chen1998speaker}, the Kullback Leibler divergence \cite{delacourt2000distbic}, and more recently, subspace-based metrics \cite{ide2007change,kawahara2007change}, kernel-based metrics \cite{desobry2005online}, and density-ratio \cite{kanamori2010theoretical,kawahara2012sequential}. While various metrics have been explored, how to choose thresholds and perform change-point selection, which is also a determining factor for detection performance, is relatively less studied. Heuristic-based rules or procedures are dominant and not well-performed, e.g. selecting local peaks above a threshold \cite{kawahara2007change}, discarding the lower one if two peaks are close \cite{liu2013change}, or requiring the metric differences between change-points and their neighbouring valleys above a threshold \cite{delacourt2000distbic}. 


In this paper, we propose to apply DPP to address the difficulty of selecting change-points. Based on existing well-studied metrics, we can create a preliminary set of change-point candidates without much difficulty. Then, we treat these change-point candidates as DPP items, and conduct DPP-based subset selection to obtain the final estimate of the change-points that favours both quality and diversity.

The contribution of this paper is two-fold. First, we introduce a class of DPP, called BwDPPs, that are characterized by an almost block diagonal kernel matrix and thus can allow efficient block-wise MAP inference. Second, BwDPPs are successfully applied to address the difficult problem of selecting change-points, which results in a new BwDPP-based CPD method, named BwDppCpd. 

The rest of the paper is organized as follows. After describing brief preliminaries, we introduce BwDPPs and give our theoretical result on the BwDPP-MAP method. Next, we introduce BwDppCpd and present evaluation experiment results on a number of real-world datasets. Finally, we conclude the paper with a discussion on potential future directions.
\section{Preliminaries \label{Sec: Pre}} 
Throughout the paper, we are interested in MAP inference for BwDPPs, a particular class of DPP where the L-ensemble kernel $\mathbf{L}$ is almost block diagonal\footnote{Such matrices could also be defined as a particular class of block tridiagonal matrices, where the off-diagonal sub-matrices $\mathbf{A}_i$ only have a few non-zeros entries at the bottom left.}, namely 
\begin{equation}\label{def: abd}
\mathbf{L}\triangleq\left[ \begin{array}{ccccc}
\mathbf{L}_1 & \mathbf{A}_1 & ~ & \cdots & \mathbf{0}\\
\mathbf{A}_1^T & \mathbf{L}_2 & \mathbf{A}_2 & ~ & ~ \\
 ~ & \ddots & \ddots & \ddots  & \vdots \\
 ~ & ~ & \mathbf{A}_{m-2}^T & \mathbf{L}_{m-1} & \mathbf{A}_{m-1} \\
 \mathbf{0} & \cdots  & ~ & \mathbf{A}_{m-1}^T & \mathbf{L}_{m}
\end{array}\right],
\end{equation}
where the diagonal sub-matrices $\mathbf{L}_i\in \mathbb{R}^{l_i\times l_i}$ are sub-kernels containing DPP items that are mutually similar, and the off-diagonal sub-matrices $\mathbf{A}_{i} \in \mathbb{R}^{l_i\times l_{i+1}}$ are sparse sub-matrices with non-zero entries only at the bottom left, representing the connections between adjacent sub-kernels. Fig. \ref{fig: Syn Data} (a) gives a good example of such matrices. 

Let $\mathcal{Y}$ be the set of all indices of $\mathbf{L}$ and let $\mathcal{Y}_1,\cdots,\mathcal{Y}_m$ be that of $\mathbf{L}_1,\cdots,\mathbf{L}_m$ correspondingly. For any set of indices $C_i,C_j \subseteq \mathcal{Y}$, we use $\mathbf{L}_{C_i}$ to denote the square sub-matrix indexed by $C_i$ and $\mathbf{L}_{C_i,C_j}$ the $\vert C_i\vert \times \vert C_j\vert$ sub-matrix with rows indexed by $C_i$ and columns by $C_j$. Following general notations, by $\mathbf{L}=\diag (\mathbf{L}_1,...,\mathbf{L}_m)$ we mean the block diagonal matrix $\mathbf{L}$ consisting of sub-matrices $\mathbf{L}_1,...,\mathbf{L}_m$ and $\mathbf{L}\succeq 0$ means that $\mathbf{L}$ is positive semi-definite.
 
\section{MAP Inference for BwDPPs \label{Sec: MAP}}
\subsection{Strictly Block Diagonal Kernel}
We first consider the motivating case where the kernel is strictly block diagonal, i.e. all elements in the off-diagonal sub-matrices $\mathbf{A}_i$ are zero. It can be easily seen that the following divide-and-conquer theorem holds.
\begin{theorem}\label{thrm 0 ordr slt}
For the DPP with a block diagonal kernel $\mathbf{L}=\diag(\mathbf{L}_1,\cdots,\mathbf{L}_m)$ over ground set $\mathcal{Y}=\bigcup_{i=1}^m \mathcal{Y}_i$ which is partitioned correspondingly, the MAP solution can be obtained as: 
\begin{equation}
\hat{C} = \hat{C}_1 \cup \cdots \cup \hat{C}_m,
\end{equation}
where $\hat{C}=\displaystyle \argmax _ {C \subseteq \mathcal{Y}}\det(\mathbf{L}_C)$, and $\displaystyle\hat{C}_i=\argmax_{C_i \subseteq \mathcal{Y}_i} \det(\mathbf{L}_{C_i})$.
\end{theorem}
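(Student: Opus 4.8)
The plan is to exploit the multiplicativity of the determinant under the block-diagonal structure, reducing the joint maximization over $\mathcal{Y}$ to independent per-block maximizations. First I would fix an arbitrary candidate set $C \subseteq \mathcal{Y}$ and decompose it along the partition of the ground set by setting $C_i = C \cap \mathcal{Y}_i$, so that $C = C_1 \cup \cdots \cup C_m$ is a disjoint union. The crucial structural observation is that because $\mathbf{L}$ is block diagonal, any entry $L_{ab}$ with $a$ and $b$ lying in distinct blocks vanishes; consequently the principal submatrix $\mathbf{L}_C$ inherits a block-diagonal form $\mathbf{L}_C = \diag(\mathbf{L}_{C_1}, \ldots, \mathbf{L}_{C_m})$, where each $\mathbf{L}_{C_i}$ is the principal submatrix of the block $\mathbf{L}_i$ indexed by $C_i$.

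From here the determinant factorizes as $\det(\mathbf{L}_C) = \prod_{i=1}^m \det(\mathbf{L}_{C_i})$. The objective thus splits into a product over blocks, where each factor $\det(\mathbf{L}_{C_i})$ depends only on the choice of $C_i \subseteq \mathcal{Y}_i$, and these choices range independently. Next I would observe that since $\mathbf{L} \succeq 0$, every principal submatrix is also positive semi-definite, so each factor satisfies $\det(\mathbf{L}_{C_i}) \geq 0$. This non-negativity is what lets me pass the global maximum inside the product, obtaining $\max_{C \subseteq \mathcal{Y}} \det(\mathbf{L}_C) = \prod_{i=1}^m \max_{C_i \subseteq \mathcal{Y}_i} \det(\mathbf{L}_{C_i})$, with the maximizer of the product achieved by maximizing each factor separately. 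Assembling the per-block maximizers $\hat{C}_i$ then yields the global maximizer $\hat{C} = \hat{C}_1 \cup \cdots \cup \hat{C}_m$.

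The argument is essentially elementary, so there is no deep obstacle; the only point requiring genuine care is justifying that the maximum of the product equals the product of the maxima. This step is valid precisely because all factors are non-negative --- for a product of non-negative terms varying independently, raising any single factor to its own maximum can never decrease the product, and the joint maximizer is the tuple of individual maximizers. Were the determinants allowed to take negative values, this factorization would fail, so I would invoke positive semi-definiteness explicitly rather than treat it as automatic. A minor bookkeeping remark is that the convention $\det(\mathbf{L}_\emptyset) = 1$ keeps the empty-block case consistent, so blocks contributing nothing to $C$ simply leave the product unchanged.
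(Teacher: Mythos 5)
Your proof is correct and follows exactly the argument the paper intends: the paper states Theorem \ref{thrm 0 ordr slt} as ``easily seen'' without an explicit proof, and the natural justification is precisely your factorization $\det(\mathbf{L}_C)=\prod_{i=1}^m\det(\mathbf{L}_{C_i})$ (the degenerate case of the paper's Schur-complement recursion (\ref{apr e3}) when all $\mathbf{A}_i=\mathbf{0}$), combined with non-negativity of the factors to split the maximization. Your explicit care about positive semi-definiteness and the convention $\det(\mathbf{L}_\emptyset)=1$ is a sound addition rather than a deviation.
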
  

Theorem \ref{thrm 0 ordr slt} tells us that the MAP inference with a strictly block diagonal kernel can be decomposed into a series of sub-inferences with its sub-kernels. In this way, the overall computation cost can be largely reduced. Noting that no exact DPP-MAP algorithms are available so far, any approximate DPP-MAP algorithms could be used in a plug-and-play way for the sub-inferences.

\subsection{Almost Block Diagonal Kernel}
Now we analyze the MAP inference for BwDPP with an almost block diagonal kernel as defined in (\ref{def: abd}). Let $C \subseteq \mathcal{Y}$ be the hypothesized subset to be selected from $\mathbf{L}$ and let $C_1 \subseteq \mathcal{Y}_1,\cdots,C_m \subseteq \mathcal{Y}_m$ be that from $\mathbf{L}_1,\cdots,\mathbf{L}_m$ correspondingly, where $C_i=C \cap \mathcal{Y}_i$. Without loss of generality, we assume $\mathbf{L}_{C_i}$ is invertible\footnote{That simply assumes that we only consider the non-trivial subsets selected with a DPP kernel $\mathbf{L}$, i.e. $\det(\mathbf{L}_{C_i})>0$. \label{Assump}} for $i=1,\cdots,m$. By defining $\mathbf{\tilde{L}}_{C_i}$ recursively as $\mathbf{\tilde{L}}_{C_i}\triangleq$
\begin{equation} \label{def: L tilde C}
\left\{
\begin{array}{cl}
\mathbf{L}_{C_i} & i=1,\\
\mathbf{L}_{C_i}-\mathbf{L}_{C_{i-1},C_i}^T\mathbf{\tilde{L}}_{C_{i-1}}^{-1}\mathbf{L}_{C_{i-1},C_i} & i=2,\cdots,m
\end{array}\right.
\end{equation}
one could rewrite the MAP objective function: $\det(\mathbf{L}_C)$
\begin{equation}\label{apr e1}
\begin{split} 
&=\det(\mathbf{L}_{C_1})\det(\mathbf{L}_{\cup_{i=2}^mC_2}-\mathbf{L}_{C_1,\cup_{i=2}^m C_i}^T\mathbf{L}_{C_1}^{-1}\mathbf{L}_{C_1,\cup_{i=2}^m C_i})\\
&=\det(\mathbf{\tilde{L}}_{C_1})\det(\begin{bmatrix}
\mathbf{\tilde{L}}_{C_2} & [\mathbf{L}_{C_2,C_3} ~ \mathbf{0}]\\ [\mathbf{L}_{C_2,C_3} ~ \mathbf{0}]^T & \mathbf{L}_{\cup_{i=3}^m C_i}
\end{bmatrix}),
\end{split}
\end{equation}
where $\mathbf{0}$ represents zero matrix of appropriate size that fill the corresponding area with zeros. The key to the second equation above is $\mathbf{L}_{C_1,C_i}=\mathbf{0}$ for $i\geq 3$, since $\mathbf{L}$ is an almost block diagonal kernel. Continuing this recursion,
\begin{equation}\label{apr e3}
\textstyle\det(\mathbf{L}_C)=\cdots=\prod_{i=1}^m \det(\mathbf{\tilde{L}}_{C_i}).
\end{equation}
Hence, the MAP objective function is reduced to:
\begin{equation}
\argmax_{C\in \mathcal{Y}}\det(\mathbf{L}_C) = \argmax _{C_1\in \mathcal{Y}_1,\cdots,C_m\in \mathcal{Y}_m}{\textstyle \prod_{i=1}^m \det(\mathbf{\tilde{L}}_{C_i})}.
\end{equation}

As $\mathbf{\tilde{L}}_{C_i}$ depends on $C_1,\cdots,C_i$, we cannot optimize $\det(\mathbf{\tilde{L}}_{C_1}),\cdots,\det(\mathbf{\tilde{L}}_{C_m})$ separately. Alternatively, we provide an approximate method that optimize over $C_1,\cdots,C_m$ sequentially, named the BwDPP-MAP method, which is a depth-first greedy search method in essence. The BwDPP-MAP is described in Table \ref{BwDPP-MAP Alg}, where $\argmax_{C_i;C_j=\hat{C}_j, j=1,\cdots,i-1}$ denotes optimizing over $C_i$ with the value of $C_j$ fixed as $\hat{C}_j$ for $j=1,\cdots,i-1$, and the sub-kernel\footnote{Both $\mathbf{L}_{\mathcal{Y}_i}$ and $\mathbf{\tilde{L}}_{\mathcal{Y}_i}$ are called sub-kernels.} $\mathbf{\tilde{L}}_{\mathcal{Y}_i}$ is given similarly as $\mathbf{\tilde{L}}_{C_i}$, namely $\mathbf{\tilde{L}}_{\mathcal{Y}_i}\triangleq$
\begin{equation}\label{apr e4}
\left\{
\begin{array}{cl}
\mathbf{L}_{i} & i=1,\\
\mathbf{L}_{i}-\mathbf{L}_{C_{i-1},\mathcal{Y}_i}^T\mathbf{\tilde{L}}_{C_{i-1}}^{-1}\mathbf{L}_{C_{i-1},\mathcal{Y}_i} & i=2,\cdots,m
\end{array}\right.
\end{equation}
One may notice that $(\mathbf{\tilde{L}}_{\mathcal{Y}_i})_{C_i}$ is equivalent to $\mathbf{\tilde{L}}_{C_i}$.

\begin{table}
\caption{BwDPP-MAP Algorithm}
\label{BwDPP-MAP Alg} 
\centering
\begin{tabular}{l}
\toprule[1pt]
{\bf Input:} \hspace{0.5em} $\mathbf{L}$ as defined in (\ref{def: abd}); \\
{\bf Output:} \hspace{0.5em} Subset of items $\hat{C}$.\\
\hline
{\bf For:} $i = 1,\cdots, m$\\

\hspace{1.5em} Compute $\mathbf{\tilde{L}}_{\mathcal{Y}_i}$ via (\ref{apr e4});\\ 
\hspace{1.5em} Perform sub-inference over $C_i$ via \\
\hspace{1.5em} $ \hat{C}_i=\argmax_{C_i\in \mathcal{Y}_i;C_j=\hat{C}_j, j=1,\cdots,i-1}\det((\mathbf{\tilde{L}}_{\mathcal{Y}_i})_{C_i})$;\\

{\bf Return:} $\hat{C}=\bigcup_{i=1}^m \hat{C}_i$. \\
\bottomrule[1pt]
\end{tabular}
\end{table}

In conclusion, similar to the MAP inference with a strictly block diagonal kernel, by using BwDPP-MAP, the MAP inference for an almost block diagonal kernel can be decomposed into a series of sub-inferences for the sub-kernels as well. There are four comments for this conclusion.

First, it should be noted that the above BwDPP-MAP method is an approximate optimization method, even if each sub-inference step is conducted exactly. This is because $\mathbf{\tilde{L}}_{C_i}$ depends on $C_1,\cdots,C_i$. We provide an empirical evaluation later, showing that through block-wise operation, the greedy search in BwDPP-MAP can achieve computation speed-up with marginal sacrifice of the accuracy.

Second, by the following Lemma \ref{lm apr}, we show that each sub-kernel $\mathbf{\tilde{L}}_{\mathcal{Y}_i}$ is positive semi-definite, so that it is theoretically guaranteed that we can conduct each sub-inference via existing DPP-MAP algorithms, e.g. the greedy DPP-MAP algorithm (Table \ref{Greedy MAP Alg}) \cite{gillenwater2012near}. One may find the proof of Lemma \ref{lm apr} in the appendix. 

\begin{lemma}\label{lm apr}
$\mathbf{\tilde{L}}_{\mathcal{Y}_i}\succeq 0$, for $i=1,\cdots,m$.
\end{lemma}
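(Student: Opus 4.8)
The plan is to exhibit each sub-kernel $\mathbf{\tilde{L}}_{\mathcal{Y}_i}$ as a \emph{Schur complement} taken inside a principal submatrix of the full kernel $\mathbf{L}$, and then invoke the standard fact that the Schur complement of a positive definite block in a positive semi-definite matrix is itself positive semi-definite. I would argue by induction on $i$. The base case $i=1$ is immediate: $\mathbf{\tilde{L}}_{\mathcal{Y}_1}=\mathbf{L}_1$ is a principal submatrix of $\mathbf{L}\succeq 0$ and is therefore positive semi-definite.

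For the inductive step, write $\mathcal{D}=\bigcup_{j=1}^{i-1}C_j$ for the indices already selected from the first $i-1$ blocks. The central claim I would establish is the telescoping identity
\begin{equation}
\mathbf{\tilde{L}}_{\mathcal{Y}_i}=\mathbf{L}_{\mathcal{Y}_i}-\mathbf{L}_{\mathcal{D},\mathcal{Y}_i}^T\mathbf{L}_{\mathcal{D}}^{-1}\mathbf{L}_{\mathcal{D},\mathcal{Y}_i},
\end{equation}
i.e. that $\mathbf{\tilde{L}}_{\mathcal{Y}_i}$ is \emph{exactly} the Schur complement of $\mathbf{L}_{\mathcal{D}}$ in the principal submatrix $\mathbf{L}_{\mathcal{D}\cup\mathcal{Y}_i}$. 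Two ingredients drive this identity. First, the defining recursion (\ref{apr e4}) eliminates one block at a time through a single-step Schur complement against $\mathbf{\tilde{L}}_{C_{i-1}}$; unfolding it is precisely the repeated-Schur-complement computation already carried out for determinants in (\ref{apr e1})--(\ref{apr e3}). Second, because $\mathbf{L}$ is almost block diagonal, $\mathbf{L}_{C_j,\mathcal{Y}_i}=\mathbf{0}$ for every $j\leq i-2$, so block $\mathcal{Y}_i$ couples only to its immediate predecessor; this sparsity is what makes the successive eliminations telescope into a single Schur complement against $\mathbf{L}_{\mathcal{D}}$ without leaving spurious cross-terms.

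Granting the identity, the conclusion is short. The matrix $\mathbf{L}_{\mathcal{D}\cup\mathcal{Y}_i}$ is a principal submatrix of $\mathbf{L}\succeq 0$, hence $\mathbf{L}_{\mathcal{D}\cup\mathcal{Y}_i}\succeq 0$; and $\mathbf{L}_{\mathcal{D}}$ is positive definite, since it is a PSD principal submatrix whose determinant factorizes as $\prod_{j=1}^{i-1}\det(\mathbf{\tilde{L}}_{C_j})$ and is nonzero under the standing non-triviality assumption. Testing the quadratic form of $\mathbf{L}_{\mathcal{D}\cup\mathcal{Y}_i}$ on vectors of the form $(-\mathbf{L}_{\mathcal{D}}^{-1}\mathbf{L}_{\mathcal{D},\mathcal{Y}_i}v,\,v)$ then yields $v^T\mathbf{\tilde{L}}_{\mathcal{Y}_i}v\geq 0$ for all $v$, which is the desired $\mathbf{\tilde{L}}_{\mathcal{Y}_i}\succeq 0$. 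As a by-product, $\mathbf{\tilde{L}}_{C_i}=(\mathbf{\tilde{L}}_{\mathcal{Y}_i})_{C_i}$ is then a principal submatrix of a PSD matrix and so is PSD as well, feeding the next step of the induction.

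I expect the main obstacle to be rigorously justifying the telescoping identity rather than the final PSD deduction. Concretely, one must verify that collapsing the one-step recursion (\ref{apr e4}) into a single Schur complement against the full block $\mathbf{L}_{\mathcal{D}}$ is valid---this is the Crabtree--Haynsworth quotient property of Schur complements---and that the almost-block-diagonal structure really does force all non-adjacent cross-blocks $\mathbf{L}_{C_j,\mathcal{Y}_i}$ with $j\leq i-2$ to vanish, so that no off-diagonal contamination survives the elimination. A cleaner but equivalent route, if one prefers to avoid quoting the quotient formula, is to show directly that the $2\times2$ block matrix $\bigl[\begin{smallmatrix}\mathbf{\tilde{L}}_{C_{i-1}} & \mathbf{L}_{C_{i-1},\mathcal{Y}_i}\\ \mathbf{L}_{C_{i-1},\mathcal{Y}_i}^T & \mathbf{L}_i\end{smallmatrix}\bigr]$ is PSD---its Schur complement against the top-left block is $\mathbf{\tilde{L}}_{\mathcal{Y}_i}$ by (\ref{apr e4})---again using sparsity to identify its top-left block with the previous-step Schur complement.
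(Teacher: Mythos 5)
Your proof is correct, but it is organized differently from the paper's. The paper's appendix proof never collapses the recursion: it defines a sequence of residual matrices $\mathbf{S}^0=\mathbf{L}$, $\mathbf{S}^1,\dots,\mathbf{S}^{m-1}$, each obtained from a principal submatrix of the previous one by a \emph{single-step} Schur complement against $\mathbf{\tilde{L}}_{C_i}\succ 0$, and the induction hypothesis is that the entire residual $\mathbf{S}^{i-1}$ (covering all of $\mathcal{Y}_{i+1},\dots,\mathcal{Y}_m$, not just the next block) is PSD; $\mathbf{\tilde{L}}_{\mathcal{Y}_{i+1}}$ is then read off as a principal submatrix of $\mathbf{S}^i$. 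You instead telescope the recursion (\ref{apr e4}) into the single identity $\mathbf{\tilde{L}}_{\mathcal{Y}_i}=\mathbf{L}_{\mathcal{Y}_i}-\mathbf{L}_{\mathcal{D},\mathcal{Y}_i}^T\mathbf{L}_{\mathcal{D}}^{-1}\mathbf{L}_{\mathcal{D},\mathcal{Y}_i}$ with $\mathcal{D}=\cup_{j<i}C_j$, and apply the PSD-Schur-complement fact once. That identity is true, and you correctly name the two ingredients (the quotient property and the vanishing of $\mathbf{L}_{C_j,\mathcal{Y}_i}$ for $j\le i-2$); in fact it is exactly what the paper proves \emph{separately} as Lemma~\ref{lm Lc inv} en route to Theorem~\ref{thrm connection}, where $\mathbf{\tilde{L}}_{\mathcal{Y}_i}$ is identified with a conditional DPP kernel. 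So your route buys a unification — Lemma~\ref{lm apr} becomes a corollary of the conditional-kernel characterization, with the PSD conclusion dropping out of a single Schur complement in the original $\mathbf{L}$ — at the cost of having to prove the telescoping/quotient identity up front, which is the one step you leave at the level of a sketch. The paper's route avoids that identity entirely by carrying the heavier induction hypothesis (PSD-ness of the whole residual matrix). One caution on your "cleaner alternative": establishing that $\bigl[\begin{smallmatrix}\mathbf{\tilde{L}}_{C_{i-1}} & \mathbf{L}_{C_{i-1},\mathcal{Y}_i}\\ \mathbf{L}_{C_{i-1},\mathcal{Y}_i}^T & \mathbf{L}_i\end{smallmatrix}\bigr]\succeq 0$ is not free — it is a principal submatrix of the paper's residual $\mathbf{S}^{i-2}$, so to justify it you would end up strengthening your induction hypothesis to essentially the paper's, or falling back on the telescoped identity anyway. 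Your auxiliary steps (positive definiteness of $\mathbf{L}_{\mathcal{D}}$ via the determinant factorization (\ref{apr e3}) and the standing non-degeneracy assumption, and the quadratic-form test on vectors $(-\mathbf{L}_{\mathcal{D}}^{-1}\mathbf{L}_{\mathcal{D},\mathcal{Y}_i}v,\,v)$) are all sound.
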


Third, in order to apply BwDPP-MAP, we need to first partition a given DPP kernel into the form of an almost block diagonal matrix as defined in (\ref{def: abd}). The partition is not unique. A trivial partition for an arbitrary DPP kernel is no partition, i.e., regarding the whole matrix as a single block. We leave the study of finding the optimal partition for further work. Here we provide a heuristic rule for partition, which is called $\gamma$-partition and performs well in our experiments.

\begin{definition}\label{def: conlvl}
($\gamma$-partition) A $\gamma$-partition is defined by partitioning a DPP kernel $\mathbf{L}$ into the almost block diagonal form as defined in (\ref{def: abd}) with the maximum number of blocks (i.e. the largest possible m)\footnote{Generally speaking, a partition of a kernel of size $N$ into $m$ sub-kernels will approximately reduce the computational complexity $m^2$ times. A larger $m$ implies larger computation reduction.}, where for every off-diagonal matrix $\mathbf{A}_i$, the size of its non-zero area is only at the bottom left and does not exceed $\gamma \times \gamma$.
\end{definition}

A heuristic way to obtain $\gamma$-partition for a kernel L is to first identify a series of non-overlapping dense square sub-matrices along the main diagonal as many as possible. Next, two adjacent square sub-matrices in the main diagonal are merged if the size of the non-zero area in their corresponding off-diagonal sub-matrix exceeds $\gamma \times \gamma$.

It should be noted that a kernel could be subject to $\gamma$-partition in one or more ways with different values of $\gamma$. By taking $\gamma$-partition for a kernel with different values of $\gamma$, we can obtain a balance between computation cost and optimization accuracy. A smaller $\gamma$ implies smaller $m$ achievable in $\gamma$-partition, and thus smaller computation reduction. On the other hand, a smaller $\gamma$ means smaller degree of interaction between adjacent sub-inferences, and thus better optimization accuracy.

Fourth, an empirical illustration of BwDPP-MAP is given in Fig. \ref{fig: Syn Data}, where the greedy MAP algorithm (Table \ref{Greedy MAP Alg}) \cite{gillenwater2012near} is used for the sub-inferences in BwDPP-MAP. The synthetic kernel size is fixed as $500$. For each realization, the area of non-zero entries in the kernel is first specified by uniformly randomly choosing the size of sub-kernels from $[10, 30]$ and the size of the non-zero areas in off-diagonal sub-matrices from $\{0,2,4,6\}$. Next, a vector $\mathbf{B}_i$ is generated for each item $i$ separately, following standard normal distribution. Finally, for all non-zero entries ($L_{ij}\neq 0$) specified in the previous step, the entry value is given by $L_{ij}=\mathbf{B}_i^T \mathbf{B}_j$. Fig. \ref{fig: Syn Data} (a) provides an example for such synthetic kernel. 

We generate 1000 synthetic kernels as described above. For each synthetic kernel, we take $\gamma$-partition with $\gamma=0,2,4,6$, and then run BwDPP-MAP. The performance of directly applying the greedy MAP algorithm on the original unpartitioned kernel is used as baseline. The results in Fig. \ref{fig: Syn Data} (b) show that BwDPP-MAP runs much faster than the baseline. With the increase of $\gamma$, the runtime drops while the inference accuracy degrades within a tolerable range.  

\subsection{Connection between BwDPP-MAP and its Sub-inference Algorithm}
Any DPP-MAP inference algorithm can be used in a plug-and-play fashion for the sub-inference procedure of BwDPP. It is natural to ask the connection between BwDPP-MAP and its corresponding DPP-MAP algorithm. The relation is given by the following result.

\begin{theorem} \label{thrm connection}
Let $f$ be any DPP-MAP algorithm for BwDPP-MAP sub-inference, where $f$ maps a positive semi-definite matrix to a subset of its indices, i.e. $f: \mathbf{L} \in \mathbb{S}_+ \mapsto Y \subseteq \mathcal{Y}$. BwDPP-MAP (table \ref{BwDPP-MAP Alg}) is equivalent to applying the following steps successively to the almost block diagonal kernel as defined in (\ref{def: abd}):
\begin{equation}
\hat{C}_1 = f(\mathbf{L}_{\mathcal{Y}_1}),
\end{equation}
and for $i=2,...,m$,
\begin{equation} \label{eq conditional}
\hat{C}_i = f(\mathbf{L}_{\cup_{j=1}^i \mathcal{Y}_j} \vert \hat{C}_{1:i-1} \subseteq Y, \bar{\hat{C}}_{1:i-1}\cap Y=\emptyset).
\end{equation}
where $\hat{C}_{1:i-1} = \cup_{j=1}^{i-1} \hat{C}_j$, $\bar{\hat{C}}_{1:i-1}=\cup_{j=1}^{i-1} (\mathcal{Y}_i/\hat{C}_j)$, and the input of $f$ is the conditional kernel\footnote{The conditional distribution (over set $\mathcal{Y}-A^{in}-A^{out}$) of the DPP defined by $\mathbf{L}$,
\begin{equation}
\mathcal{P}_{\mathbf{L}} (Y=A^{in} \cup B \vert A^{in} \subseteq Y, A^{out}\cap Y=\emptyset),
\end{equation} is also a DPP \cite{kulesza2012determinantal}, and the corresponding kernel, $\left(\mathbf{L}\vert A^{in}\subseteq Y, A^{out}\cap Y=\emptyset\right)$, is called the conditional kernel.}.

\end{theorem}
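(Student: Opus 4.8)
The plan is to prove the equivalence by identifying the conditional kernel appearing in (\ref{eq conditional}) with the recursively defined sub-kernel $\mathbf{\tilde{L}}_{\mathcal{Y}_i}$ of (\ref{apr e4}), so that at every stage $f$ receives exactly the same matrix in both formulations and therefore returns the same $\hat{C}_i$. Since both procedures assemble $\hat{C}=\bigcup_i \hat{C}_i$ from these per-block selections, matching the inputs block by block suffices, and I would carry this out by induction on $i$.

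First I would recall the standard form of the conditional L-ensemble kernel (Kulesza and Taskar). Conditioning on exclusion, $A^{out}\cap Y=\emptyset$, simply deletes the corresponding indices and leaves the principal submatrix $\mathbf{L}_{\mathcal{Y}\setminus A^{out}}$; conditioning on inclusion, $A^{in}\subseteq Y$, yields the Schur complement $\mathbf{L}_{\bar{A}}-\mathbf{L}_{\bar{A},A^{in}}\mathbf{L}_{A^{in}}^{-1}\mathbf{L}_{A^{in},\bar{A}}$ over the free indices $\bar{A}$. Applying both at step $i$ of (\ref{eq conditional})---restrict to $\bigcup_{j=1}^i \mathcal{Y}_j$, delete the rejected indices $\bar{\hat{C}}_{1:i-1}$, and Schur-complement against the accepted indices $\hat{C}_{1:i-1}$---shows that the conditional kernel restricted to the free block $\mathcal{Y}_i$ is
\begin{equation}
\mathbf{S}_i=\mathbf{L}_{i}-\mathbf{L}_{\hat{C}_{1:i-1},\mathcal{Y}_i}^T\,\mathbf{L}_{\hat{C}_{1:i-1}}^{-1}\,\mathbf{L}_{\hat{C}_{1:i-1},\mathcal{Y}_i}.
\end{equation}
The target is thus to prove $\mathbf{S}_i=\mathbf{\tilde{L}}_{\mathcal{Y}_i}$ for every $i$, after which $f(\mathbf{S}_i)=f(\mathbf{\tilde{L}}_{\mathcal{Y}_i})=\hat{C}_i$ gives the claim.

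The crux is reducing the \emph{global} Schur complement against the whole accepted set $\hat{C}_{1:i-1}$ to the \emph{local} one-step recursion of (\ref{apr e4}), which references only $\mathbf{\tilde{L}}_{C_{i-1}}$ (with $C_{i-1}=\hat{C}_{i-1}$) and the coupling $\mathbf{L}_{\hat{C}_{i-1},\mathcal{Y}_i}$. Here I would invoke two facts. The block tridiagonal structure of (\ref{def: abd}) forces $\mathbf{L}_{\hat{C}_{j},\mathcal{Y}_i}=\mathbf{0}$ for $j\le i-2$, so the free block $\mathcal{Y}_i$ couples only to the immediately preceding accepted block $\hat{C}_{i-1}$. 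The quotient (Crabtree--Haynsworth) property of Schur complements then lets me evaluate $\mathbf{S}_i$ in two stages: first Schur-complement against $\hat{C}_{1:i-2}$, which---because $\mathcal{Y}_i$ is disconnected from $\hat{C}_{1:i-2}$---leaves the $\mathcal{Y}_i$ block and its $\hat{C}_{i-1}$-coupling untouched while transforming the $\hat{C}_{i-1}$ block into exactly $\mathbf{\tilde{L}}_{\hat{C}_{i-1}}$ (this last identification is the same recursive Schur-complement computation already used to pass from (\ref{def: L tilde C}) to (\ref{apr e3})); then Schur-complement against $\hat{C}_{i-1}$, which produces precisely $\mathbf{L}_{i}-\mathbf{L}_{\hat{C}_{i-1},\mathcal{Y}_i}^T\mathbf{\tilde{L}}_{\hat{C}_{i-1}}^{-1}\mathbf{L}_{\hat{C}_{i-1},\mathcal{Y}_i}=\mathbf{\tilde{L}}_{\mathcal{Y}_i}$.

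I expect the main obstacle to be the bookkeeping in this two-stage reduction: verifying rigorously that the first-stage Schur complement against $\hat{C}_{1:i-2}$ acts as the identity on the $\mathcal{Y}_i$ rows and columns (and on the $\hat{C}_{i-1}$--$\mathcal{Y}_i$ block), while reproducing $\mathbf{\tilde{L}}_{\hat{C}_{i-1}}$ on the $\hat{C}_{i-1}$ block, so that the inductive hypothesis threads through cleanly. Once $\mathbf{S}_i=\mathbf{\tilde{L}}_{\mathcal{Y}_i}$ is established, positive semi-definiteness from Lemma~\ref{lm apr} guarantees $f$ is well-defined on both inputs, and induction on $i$ completes the equivalence.
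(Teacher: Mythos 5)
Your proposal is correct and follows essentially the same route as the paper: identify the conditional kernel of (\ref{eq conditional}) with $\mathbf{\tilde{L}}_{\mathcal{Y}_i}$ by writing it as the Schur complement $\mathbf{L}_{i}-\mathbf{L}_{\hat{C}_{1:i-1},\mathcal{Y}_i}^T\mathbf{L}_{\hat{C}_{1:i-1}}^{-1}\mathbf{L}_{\hat{C}_{1:i-1},\mathcal{Y}_i}$, use the block-tridiagonal sparsity to drop the coupling to $\hat{C}_{1:i-2}$, and reduce to the one-step recursion (\ref{apr e4}). The only cosmetic difference is that the paper packages your two-stage (quotient-property) Schur-complement reduction as an inductive lemma stating $(\mathbf{L}_{\hat{C}_{1:i}}^{-1})_{\hat{C}_i}=\tilde{\mathbf{L}}_{\hat{C}_i}^{-1}$, which is the same fact in inverse-block form.
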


The proof of Theorem \ref{thrm connection} is in the appendix. Theorem \ref{thrm connection} states that BwDPP-MAP is essentially a series of Bayesian belief updates, where in each update a conditional kernel is fed into $f$ that contains the information of previous selection result. The equivalent form allows us to compare BwDPP-MAP directly with the method of applying $f$ on the entire kernel. The latter does inference on the entire set $\mathcal{Y}$ for one time, while the former does the inference on a sequence of smaller subsets $\mathcal{Y}_1,...,\mathcal{Y}_m$. Concretely, in the $i$-th update, a subset $\mathcal{Y}_i$ is added to have the kernel $L_{\cup_{j=1}^i \mathcal{Y}_j}$. Then the information of previous selection result is incorporated into the kernel to generate the conditional kernel. Finally, the DPP-MAP inference is performed on the conditional kernel to select $\hat{C}_i$ from $\mathcal{Y}_i$. 

\begin{figure}
\centering
\subfigure[~]{\includegraphics[width=3cm]{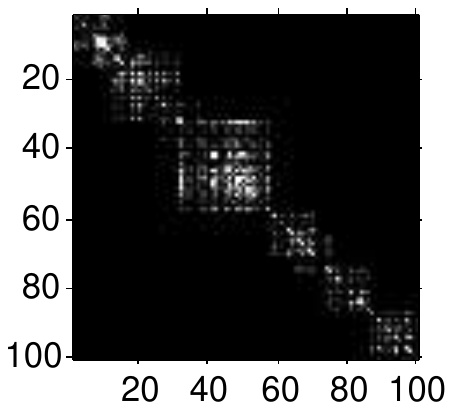}}~~~~
\subfigure[~]{\includegraphics[width=3.6cm]{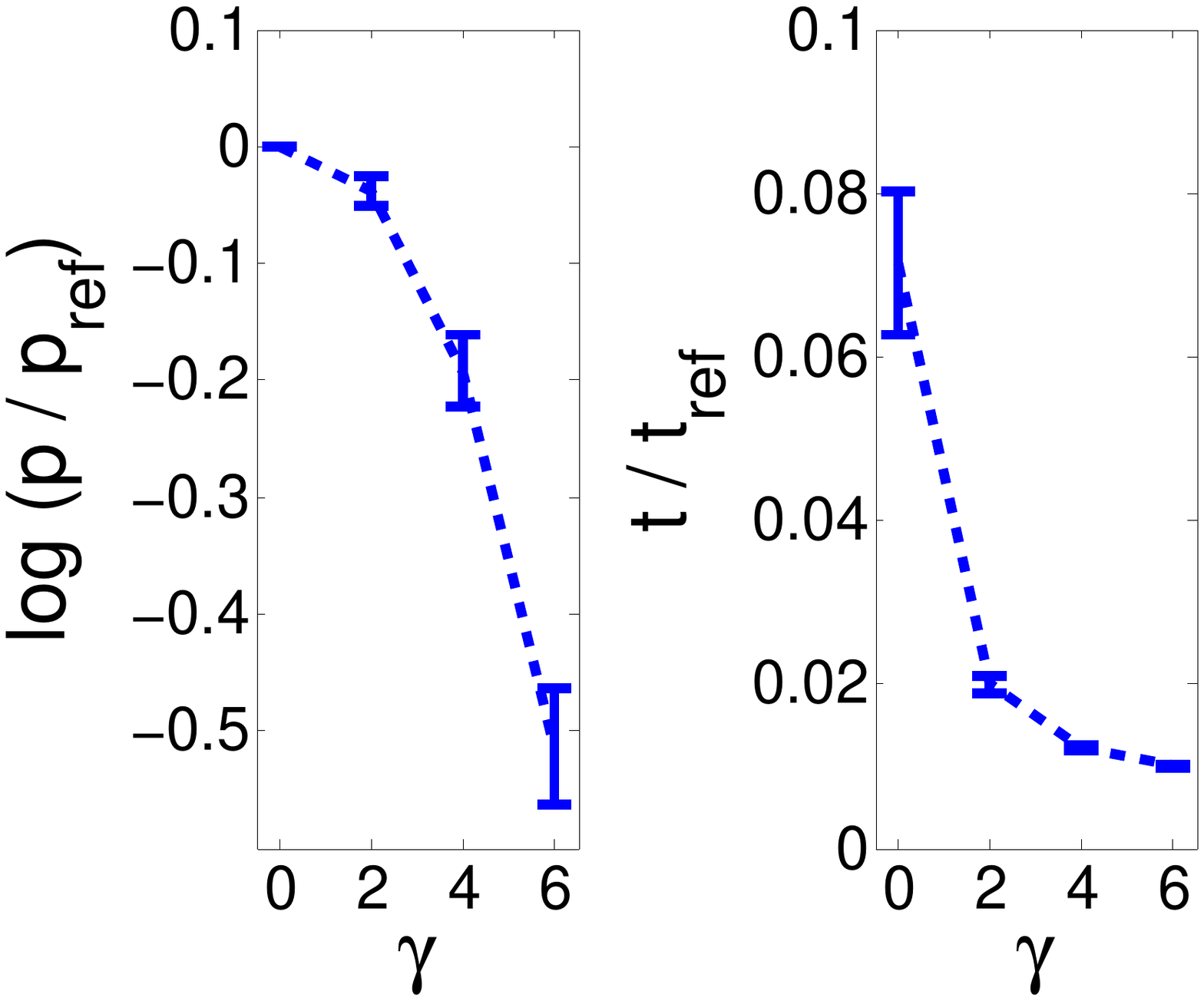}}
\caption{(a) The top-left $100 \times 100$ entries from a $500\times 500$ synthetic kernel. (b) The log-probability ratio $\log(p/p_{\rm{ref}})$ and runtime ratio $t/t_{\rm{ref}}$, obtained from using BwDPP-MAP on the same kernel with different $\gamma$-partition, where $p_{\rm{ref}}$ and $t_{\rm{ref}}$ are the baseline performance of directly applying the greedy MAP algorithm on the original unpartitioned kernel. Results are averaged over $1000$ kernels. The error bar represents $99.7\%$ confidence level.}
\label{fig: Syn Data}
\end{figure}

\begin{table}
\caption{Greedy DPP-MAP Algorithm}
\label{Greedy MAP Alg} 
\centering
\begin{tabular}{l}
\toprule[1pt]
{\bf Input:} \hspace{0.5em} $\mathbf{L}$; \hspace{1.5em} {\bf Output:} \hspace{0.5em} $\hat{C}$.\\
\hline
{\bf Initialization:} \hspace{0.5em} Set $\hat{C}\leftarrow\emptyset$, $U\leftarrow\mathcal{Y}$;\\
{\bf While} $U$ is not empty;\\
\hspace{1.5em} $i^*\leftarrow\argmax _{i \in U} L_{ii}$; \hspace{1.5em} $\hat{C}\leftarrow \hat{C}\cup\{i^*\}$;\\
\hspace{1.5em} Compute $\mathbf{L}^*=\left(\left[(\mathbf{L}+\mathbf{I}_{\bar{\hat{C}}})^{-1}\right]_{\bar{\hat{C}}}\right)^{-1}-\mathbf{I}$; \\
\hspace{1.5em} $\mathbf{L}\leftarrow \mathbf{L}^*$; \hspace{1.5em} $U\leftarrow \{i\vert i \notin \hat{C}, \mathbf{L}_{ii}>1\}$;\\
{\bf Return:} $\hat{C}$. \\
\bottomrule[1pt]
\end{tabular}
\end{table} 
\section{BwDPP-based Change-Point Detection \label{Sec: BwDPP CPD}}
Let $\mathbf{x}_1,\cdots,\mathbf{x}_T$ be the time-series observations, where $\mathbf{x}_t \in \mathbb{R}^D$ represents the $D$-dimensional observation at time $t=1,\cdots,T$, and let $\mathbf{x}_{\tau:t}$ denote the segment of observations in the time interval $[\tau ,t]$. We further use $\mathbf{X}_1$, $\mathbf{X}_2$ to represent different segments of observations at different intervals, when explicitly denoting the beginning and ending times of the intervals are not necessary. The new CPD method will build on existing metrics. A dissimilarity metric is denoted as $d: (\mathbf{X}_1, \mathbf{X}_2) \mapsto \mathbb{R}$, which measures the dissimilarity between two arbitrary segments $\mathbf{X}_1$ and $\mathbf{X}_2$.

\subsection{Quality-Diversity Decomposition of Kernel}
Given a set of items $\mathcal{Y}=\{1,\cdots,N\}$, the DPP kernel $\mathbf{L}$ can be written as a Gram matrix $\mathbf{L}=\mathbf{B}^T\mathbf{B}$, where $\mathbf{B}_i$, the columns of $\mathbf{B}$, are vectors representing items in $\mathcal{Y}$. 

A popular decomposition of the kernel is to define $\mathbf{B}_i=q_i \bm{\phi}_i$, where $q_i\in \mathbb{R}^+$ measures the quality (magnitude) of item $i$ in $\mathcal{Y}$, and $\bm{\phi}_i\in \mathbb{R}^k$, $\Vert \bm{\phi}_i\Vert=1$ can be viewed as the angle vector of diversity features so that $\bm{\phi}_i^T \bm{\phi}_j$ measures the similarity between items $i$ and $j$. Therefore, $\mathbf{L}$ is defined as
\begin{equation}
\mathbf{L}=\diag (\mathbf{q}) * \mathbf{S} * \diag(\mathbf{q}),
\end{equation}
where $\mathbf{q}$ is the quality vector consisting of $q_i$, and $\mathbf{S}$ is the similarity matrix consisting of $S_{ij}=\bm{\phi}_i^T \bm{\phi}_j$. The quality-diversity decomposition allows us to construct $\mathbf{q}$ and $\mathbf{S}$ separately to address different concerns, which is utilized below to construct the kernel for CPD.

\subsection{BwDppCpd}
BwDppCpd is a two-step CPD method, described as follows.

\textbf{Step 1:} Based on a dissimilarity metric $d$, a preliminary set of change-point candidates is created. Consider moving a pair of adjacent windows, $\mathbf{x}_{t-w+1:t}$ and $\mathbf{x}_{t+1:t+w}$, along $t=w,\cdots,T-w$, where $w$ is the size of local windows. Then, a large $d$ value for the adjacent windows, i.e. $d(\mathbf{x}_{t-w+1:t},\mathbf{x}_{t+1:t+w})$, suggests that a change-point is likely to occur at time t. After we obtain the series of $d$ values, local peaks above the mean of the $d$ values are marked and the corresponding locations, say $t_1,\cdots,t_N$, are selected to form the preliminary set of change-point candidates $\mathcal{Y}=\{1,\cdots,N\}$.

\textbf{Step 2:} Treat the change-point candidates $\mathcal{Y}=\{1,\cdots,N\}$ as BwDPP items, and select a subset from them to be our final estimate of the change-points.

The BwDPP kernel is built via quality-diversity decomposition. We use the similarity metric $d$ once more to measure the quality of a candidate change-point to be a true one. Specifically, we define
\begin{equation}
q_i = d (\mathbf{x}_{t_{i-1}:t_i}, \mathbf{x}_{t_i:t_{i+1}}),
\end{equation}
The higher the value $q_i$ is, the sharper contrast around the change-point candidate $i$, and the better quality of $i$.

Next, the BwDPP similarity matrix is defined to address the fact that the true locations of change-points along the timeline tend to be diverse, since states would not change rapidly. This is done by assigning high similarity score to items being close to each other. Specifically, we define
\begin{equation}
S_{ij}=\exp ({-{(t_i-t_j)^2}/{\sigma^2}}),
\end{equation}
where $\sigma$ is a parameter representing the position diversity level. Finally, after taking $\gamma$-partition of the kernel $\mathbf{L}$ into the almost block diagonal form, BwDPP-MAP is used to select a set of change-points that favours both quality and diversity (Fig. \ref{fig: Hasc Demo} (b)).

\subsection{Discussion}
There is a rich studies of metrics for CPD problem. The choice of the dissimilarity metric $d(\mathbf{X}_1,\mathbf{X}_2)$ is flexible and could be well-tailored to the characteristics of the data. We present two examples that are used in our experiments.

\begin{itemize}
\item Symmetric Kullback-Leibler Divergence (SymKL):\\
If the two segments $\mathbf{X}_{1}$,$\mathbf{X}_{2}$ to be compared are assumed to follow Gaussian processes, the SymKL metric is given:
\begin{equation}\label{eq SymKL}
\begin{split}
&{\rm{SymKL}} (\mathbf{X}_{1},\mathbf{X}_{2})=\tr (\bm{\Sigma}_{1} \bm{\Sigma}_{2}^{-1}) + \tr(\bm{\Sigma}_{2} \bm{\Sigma}_{1}^{-1}) -\\ &2D + \tr((\bm{\Sigma}_{1}^{-1}+\bm{\Sigma}_{2}^{-1})(\bm{\mu}_{1}-\bm{\mu}_{2})(\bm{\mu}_{1}-\bm{\mu}_{2})^T),
\end{split}
\end{equation} 
where $\bm{\mu}$ and $\bm{\Sigma}$ are corresponding sample mean and covariance.
\item Generalized Likelihood Ratio (GLR):\\
Generally, the GLR metric is given by the likelihood ratio:
\begin{equation}
{\rm{GLR(\mathbf{X}_1,\mathbf{X}_2)}}=\frac{\mathcal{L}(\mathbf{X}_1\vert \lambda_1)\mathcal{L}(\mathbf{X}_2\vert \lambda_2)}{\mathcal{L}(\mathbf{X}_{1,2}\vert \lambda_{1,2})}.
\end{equation}
The numerator is the likelihood that the two segments follows two different models $\lambda_1$ and $\lambda_2$ respectively, while the denominator is that two segments together (denoted as $\mathbf{X}_{1,2}$) follows a single model $\lambda_{1,2}$. In practice, we plug the maximium likelihood estimates (MLE) for the parameters $\lambda_1$, $\lambda_2$, and $\lambda_{1,2}$. E.g. if we assume that the time-series segment $\mathbf{X}\triangleq \{x_1,\cdots,x_M\}$ follows a homogeneous Poisson process, where $x_i$ is the occurring time of the $i$-th event, $i=1,\cdots,M$. The log-likehood of $\mathbf{X}$ is
\begin{equation}
\mathcal{L}(\mathbf{X}\vert \lambda) = (M-1) \log\lambda - (x_M-x_1) \lambda 
\end{equation} 
where the MLE of $\lambda$ is used, $\lambda= (M-1)/ (x_M-x_1)$.
\end{itemize}

\begin{figure}
\centering
\subfigure[~]{\includegraphics[width=4.1cm]{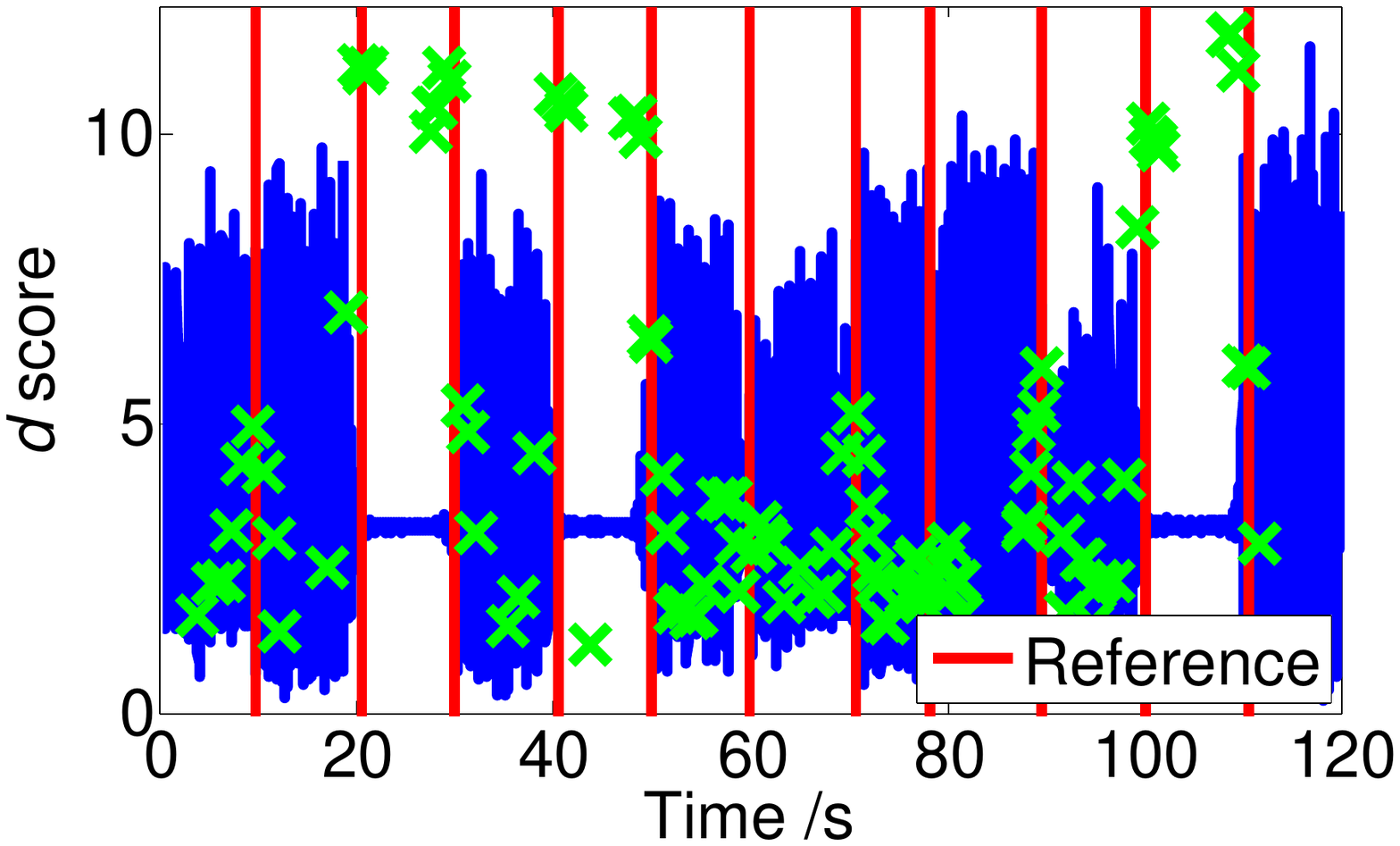}}
\subfigure[~]{\includegraphics[width=4.1cm]{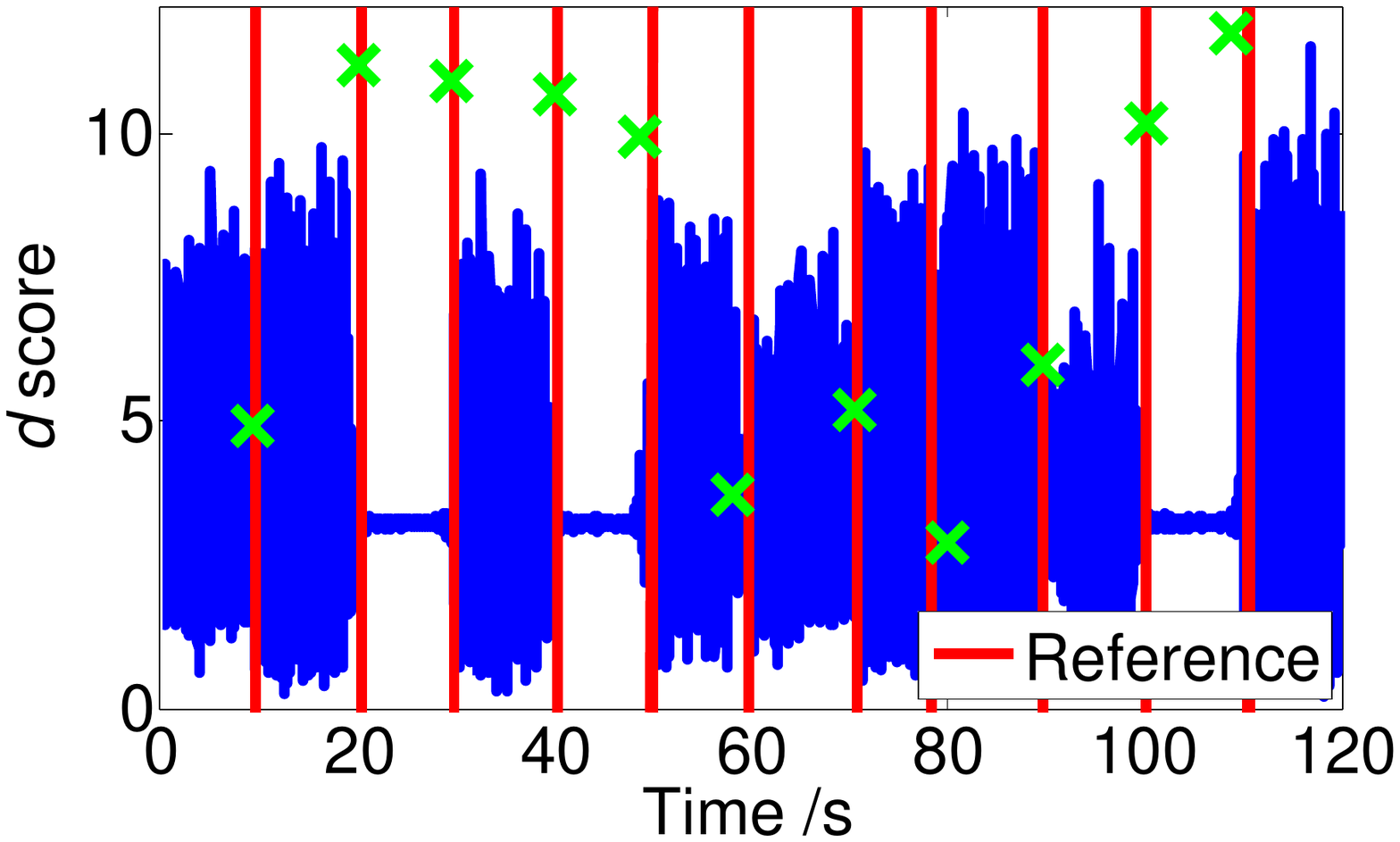}}
\qquad
\caption{An BwDppCpd example from \emph{Hasc}. (a) Change-point candidates selected in Step 1 with their $d$ scores (green cross). (b) Final estimate of change-points in step 2 with their $d$ scores (green cross).}
\label{fig: Hasc Demo}
\end{figure}
\section{Experiments \label{Sec: Exp}}
\begin{figure}
\centering
\subfigure[~]{\includegraphics[width=4.2cm]{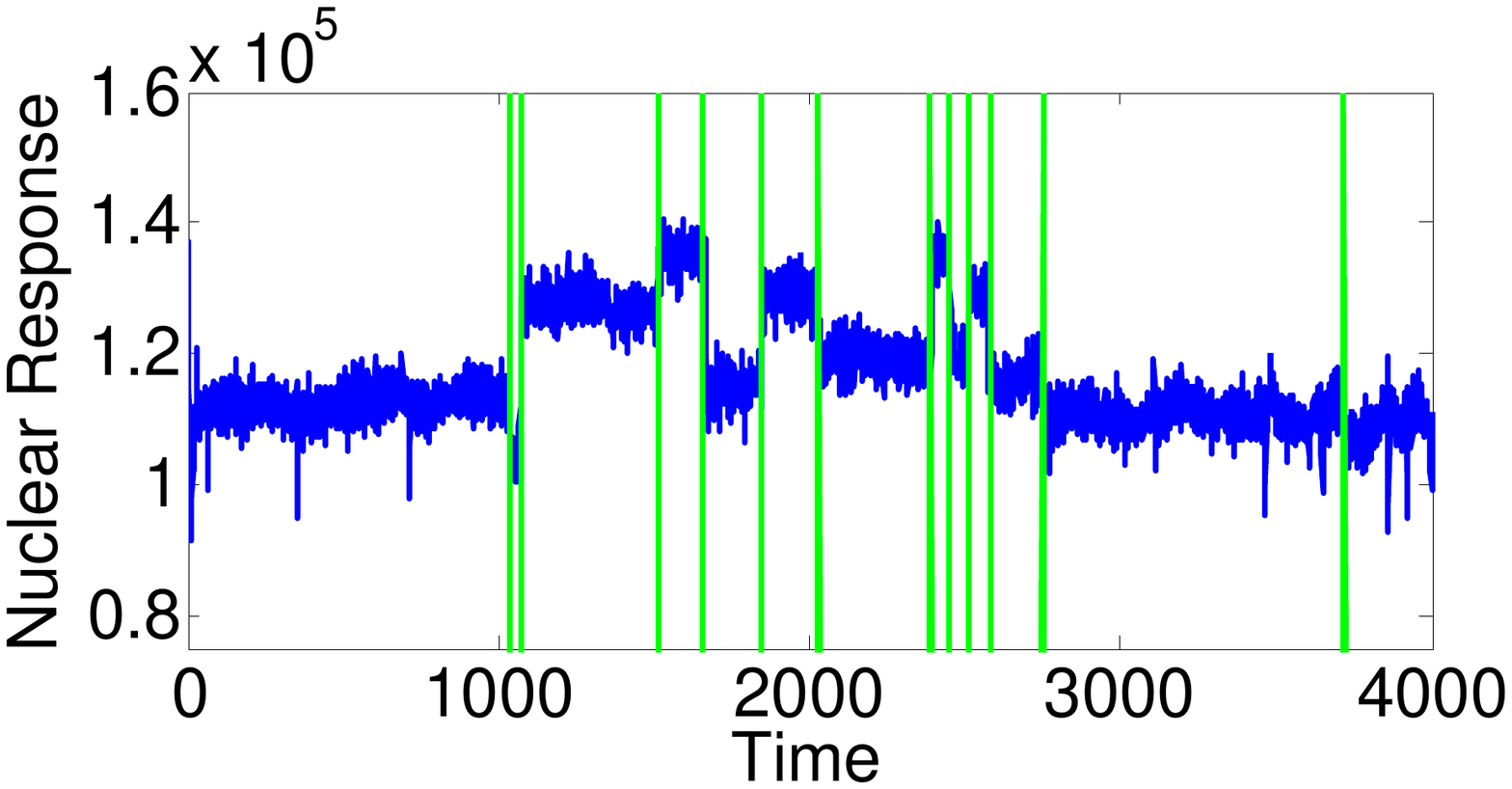}}
\subfigure[~]{\includegraphics[width=4cm]{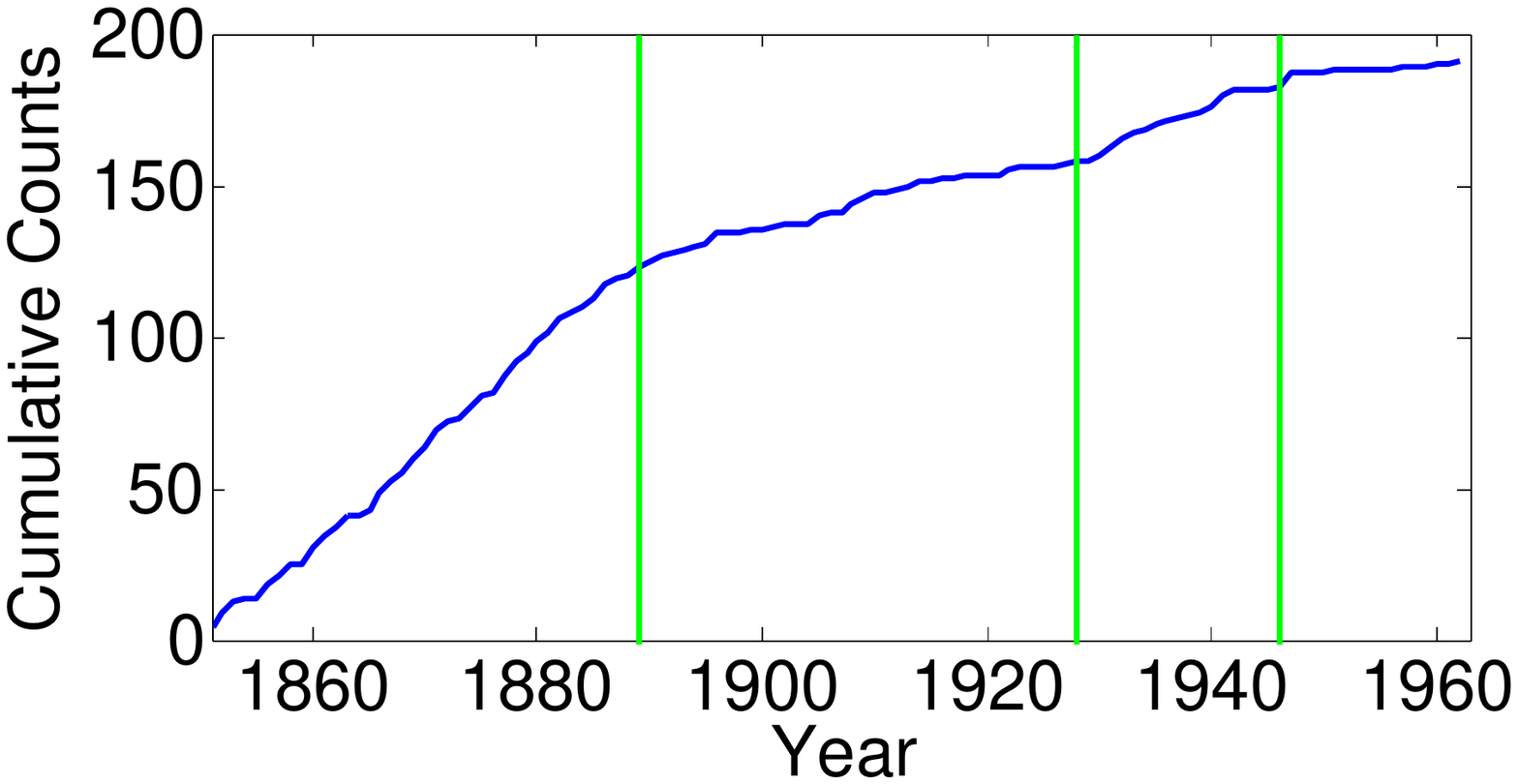}}
\qquad
\subfigure[~]{\includegraphics[width=6cm]{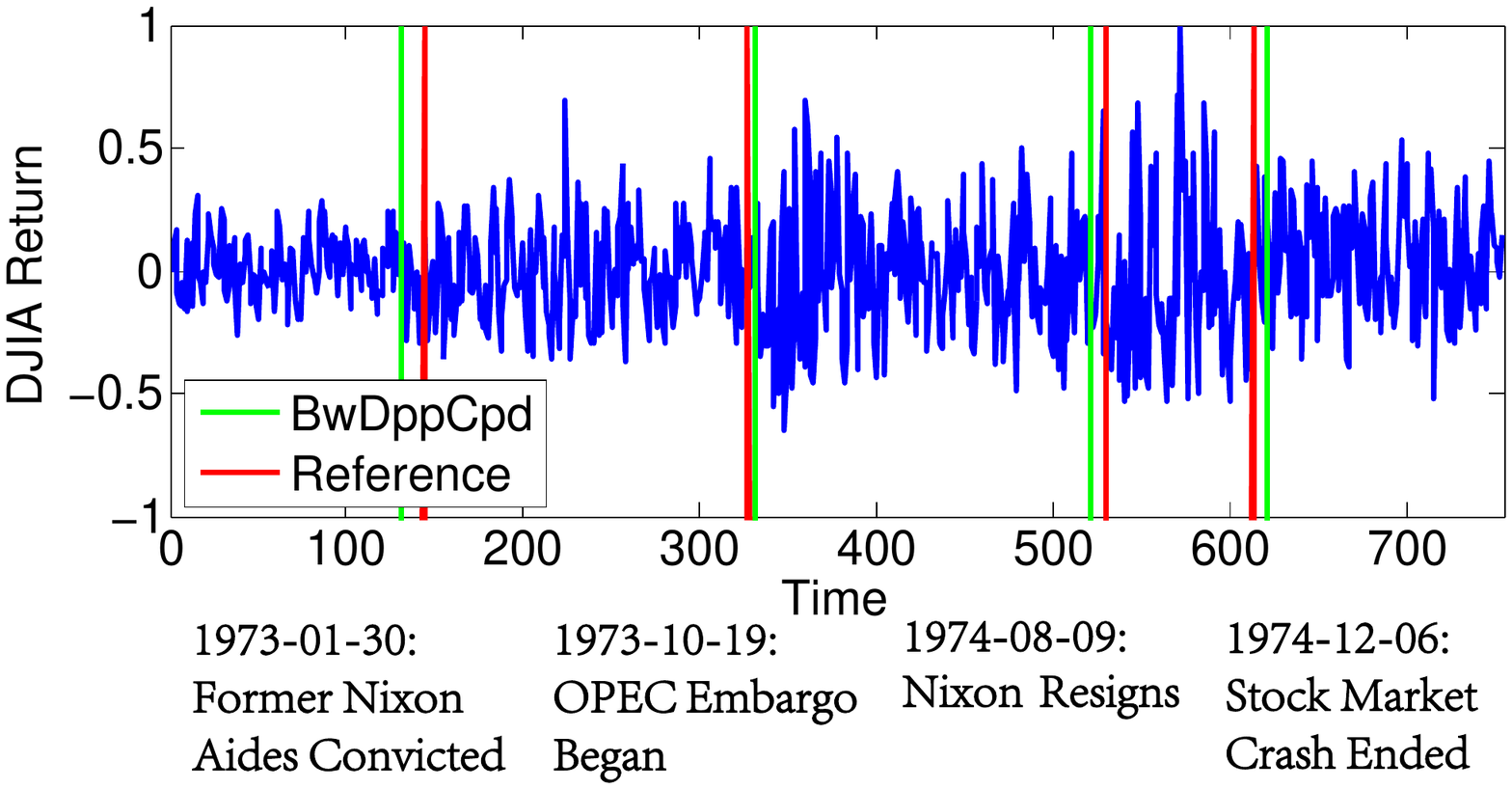}}
\caption{BwDppCpd results for \emph{Well-Log} (a), \emph{Coal Mine Disaster} (b), and \emph{DJIA} (c). Green lines are detected changes.}
\label{fig: real-world Data}
\end{figure}

The BwDppCpd method are evaluated on five real-world time-series data. Firstly, three classic datasets are examined for CPD, namely \emph{Well-Log} data, \emph{Coal Mine Disaster} data, and \emph{Dow Jones Industrial Average Return (DJIA)} data, where we set $\gamma=0$ due to the small data size. 

Next, we experiment with human activity detection and speech segmentation, where the data size becomes larger and there is no accurate model to characterize the data, making the CPD task harder. In both experiments, the numbers of DPP items varies from hundreds to thousands, where, except BwDPP-MAP, no other algorithms can perform MAP inference within a reasonable cost of time due to the large kernel scale. We set $\gamma=3$ for human activity detection and $\gamma=0,2$ for speech segmentation to provide a comparison. 

As for the dissimilarity metric $d$, Poisson processes and GLR are used in \emph{Coal Mine Disaster} and for other experiments, Gaussian models and SymKL are used.

\subsection{Well-Log Data}
\emph{Well-Log} contains 4050 measurements of nuclear magnetic response taken during the drilling of a well. It is an example of varying Gaussian mean and the changes reflect the stratification of the earth's crust \cite{adams2007bayesian}. Outliers are removed prior to the experiment. As shown in Fig. \ref{fig: real-world Data} (a), all changes are detected by BwDppCpd. 

\subsection{Coal Mine Disaster Data}
\emph{Coal Mine Disaster} \cite{jarrett1979note}, a standard dataset for testing CPD method, consists of 191 accidents from 1851 to 1962. The occurring rates of accidents are believed to have changed a few times and the task is to detect them. The BwDppCpd detection result, as shown in Fig. \ref{fig: real-world Data} (b), agrees with that in \cite{green1995reversible}.

\subsection{1972-75 Dow Jones Industrial Average Return}
\emph{DJIA} contains daily return rates of Dow Jones Industrial Average from 1972 to 1975.
It is an example of varying Gaussian variance, where the changes are caused by big events that have potential macroeconomic effects. Four changes in the data are detected by BwDppCpd, which are matched well with important events (Fig. \ref{fig: real-world Data} (c)). Compared to \cite{adams2007bayesian}, one more change is detected (the rightmost), which corresponds to the date that 73-74 stock market crash ended\footnote{http://en.wikipedia.org/wiki/1973-74\_stock\_market\_crash}. This shows that the BwDppCpd discovers more information from the data. 

\begin{table}
\centering
\begin{tabular}{|c|c|c|c|}
\hline 
~ & {\rm{PRC}}$\%$ & {\rm{RCL}}$\%$ & $F_1$ \\
\hline 
BwDppCpd & 93.05 & 87.88 & 0.9039 \\
\hline 
RuLSIF & 86.36 & 83.84 & 0.8508 \\
\hline 
\end{tabular}
\caption{CPD result on human activity detection data \emph{HASC}. \label{tab: Hasc}}
\end{table}

\begin{figure}
\centering
\includegraphics[width=5cm]{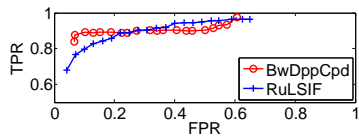}
\caption{The ROC curve of BwDppCpd and RuLISF.}
\label{fig: Roc}
\end{figure}

\subsection{Human Activity Detection}
\emph{HASC}\footnote{http://hasc.jp/hc2011/} contains human activity data collected by portable three-axis accelerometers and the task is to segment the data according to human behaviour changes. Fig. \ref{fig: Hasc Demo} (b) shows an example of \emph{Hasc}. The performance of the best algorithm in \cite{liu2013change}, RuLSIF, is used for comparison and the precision (PRC), recall (RCL), and $F_1$ measure \cite{kotti2008speaker} are used for evaluation:
\begin{align}
&{\rm{PRC}}={{\rm{CFC}}}/{{\rm{DET}}}, ~~~~{\rm{RCL}}={{\rm{CFC}}}/{{\rm{GT}}},\\
&{F_1}=2~{\rm{PRC}}~{\rm{RCL}}/({{\rm{PRC}}+{\rm{RCL}}}),
\end{align}
where ${\rm{CFC}}$ is the number of correctly found changes, ${\rm{DET}}$ is the number of detected changes, and ${\rm{GT}}$ is the number of ground-truth changes. $F_1$ score could be viewed as a overall score that balances PRL and RCL. The CPD result is shown in Table \ref{tab: Hasc}, where the parameters are set to attain the best $F_{\rm{1}}$ results for both algorithms.

The receiver operating characteristic (ROC) curve is often used to evaluate performance under different precision and recall, where true positive rate (TPR) and false positive rate (FPR) are given by ${\rm{TPR}}={\rm{RCL}}$ and ${\rm{FPR}}=1-{\rm{PRC}}$. For BwDppCpd, different levels of TPR and FPR are obtained by tuning the position diversity parameter $\sigma$ and for RuLSIF by tuning the threshold $\eta$ \cite{liu2013change}.

As shown in Table \ref{tab: Hasc} and Fig. \ref{fig: Roc}, BwDppCpd outperforms RuLISF on \emph{HASC} when the FPR is low. RuLISF has a better performance only when FPR exceeds $0.3$, which is less useful.

\subsection{Speech Segmentation}
We tested two datasets for speech segmentation. The first dataset, called \emph{Hub4m97}, is a subset (around 5 hours) from 1997 Mandarin Broadcast News Speech (HUB4-NE) released by LDC\footnote{http://catalog.ldc.upenn.edu/LDC98S73}. The second dataset, called \emph{TelRecord}, consists of 216 telephone conversations, each around 2-min long, collected from real-world call centres. Acoustic features of 12-order MFCCs (mel-frequency cepstral coefficients) are extracted as the time-series data. 

Speech segmentation is to segment the audio data into acoustically homogeneous segments, e.g. utterances from a single speaker or non-speech portions. The two datasets contain utterances with hesitations and a variety of changing background noises, presenting a great challenge for CPD.

The  BwDppCpd method with different $\gamma$ for kernel partition (denoted as Bw-$\gamma$ in Table \ref{tab SegResult}) is tested and two classic segmentation methods BIC \cite{chen1998speaker} and DISTBIC \cite{delacourt2000distbic} are used for comparison. As the same as in (Delacourt and Wellekens 2000), a post-processing step based on BIC values is also taken to reduce the false alarms for BwDppCpd.

The experiment results in Table \ref{tab SegResult} shows that BwDppCpd outperforms BIC and DISTBIC for both datasets. In addition, comparing the results obtained with $\gamma=0$ and $\gamma=2$, using $\gamma=2$ is found to be faster but has a slightly worse performance. This agrees with our analysis of BwDPP-MAP for using different $\gamma$-partition to tradeoff speed and accuracy.

\begin{table}
\centering
\begin{tabular}{|c|c|c|c|c|}
\hline 
~ & BIC & DistBIC & Bw-$0$ & Bw-$2$\\
\hline 
\multicolumn{5}{|c|}{\emph{Hub4m97}}  \\
\hline 
{\rm{PRC}$\%$} & 59.40 & 64.29 & 65.29 & 65.12\\
\hline 
{\rm{RCL}$\%$} & 78.24 & 74.98 & 78.49 & 78.39\\
\hline 
$F_1$ & 0.6753 & 0.6922 & 0.7128 & 0.7114\\
\hline 
\multicolumn{5}{|c|}{\emph{TelRecord}}  \\
\hline 
{\rm{PRC}$\%$} & 54.05 & 61.39 & 66.54 & 66.47\\
\hline 
{\rm{RCL}$\%$} & 79.97 & 81.72 & 85.47 & 84.83\\
\hline 
$F_1$ & 0.6451 & 0.7011 & 0.7483 & 0.7454\\
\hline 
\end{tabular}
\caption{Segmentation results on \emph{Hub4m97} and \emph{TelRecord}. \label{tab SegResult}}
\end{table}
\section{Conclusion\label{Sec: Con}}
In this paper, we introduced BwDPPs, a class of DPPs where the kernel is almost block diagonal and thus can allow efficient block-wise MAP inference. Moreover, BwDPPs are demonstrated to be useful in change-point detection problem. The BwDPP-based change-point detection method, BwDppCpd, shows superior performance in experiments with several real-world datasets.

The almost block diagonal kernels suit the change-point detection problem well, but BwDPPs may achieve more than that. Theoretically, BwDPP-MAP could be applied to any block tridiagonal matrices without modification. It remains to be studied the theoretical issues regarding exact or approximate partition of a DPP kernel into the form of an almost block diagonal matrix \cite{acer2013recursive}. Other potential BwDPP applications are also worth further exploration.

\section{Appendix: Proof of Lemma \ref{lm apr}}
\begin{proof}
Define
\begin{equation}
\mathbf{S}^i=\left\{\begin{array}{cl}
\mathbf{L} & i=0\\
\begin{bmatrix}
\mathbf{\tilde{L}}_{\mathcal{Y}_{i+1}} & [\mathbf{L}_{\mathcal{Y}_{i+1},\mathcal{Y}_{i+2}} ~ \mathbf{0}]\\ [\mathbf{L}_{\mathcal{Y}_{i+1},\mathcal{Y}_{i+2}} ~ \mathbf{0}]^T & \mathbf{L}_{\cup_{j=i+2}^m \mathcal{Y}_j}
\end{bmatrix} & i=1,\cdots,m-2 \\
\mathbf{\tilde{L}}_{\mathcal{Y}_{i+1}} & i=m-1
\end{array}\right..
\end{equation}
For $i=1,\cdots,m-1$, $\mathbf{S}^i$ is the Schur complement of $\mathbf{\tilde{L}}_{C_i}$ in $\mathbf{S}^{i-1}_{C_{i}\cup(\cup_{j=i+1}^m \mathcal{Y}_j)}$, the sub-matrix of $\mathbf{S}^{i-1}$. We next prove the lemma using the first principle of mathematical induction. State the predicate as: 
\begin{itemize}
\item $P(i)$: $\mathbf{S}^{i-1}$ and $\mathbf{\tilde{L}}_{\mathcal{Y}_i}$ are positive semi-definite (PSD).
\end{itemize}

$P(1)$ trivially holds as $\mathbf{\tilde{L}}_{\mathcal{Y}_1}=\mathbf{L}_1$ and $\mathbf{S}^{0}=\mathbf{L}$ are PSD.

Assuming $P(i)$ holds. $\mathbf{S}^{i-1}_{C_i\cup(\cup_{j=i+1}^m \mathcal{Y}_j)}$ is PSD because $\mathbf{S}^{i-1}$ is PSD. Since $\mathbf{\tilde{L}}_{C_i} \succ 0$ (footnote \ref{Assump}) and $\mathbf{S}^{i}$ is the Schur complement of $\mathbf{\tilde{L}}_{C_i}$ in $\mathbf{S}^{i-1}_{C_{i}\cup(\cup_{j=i+1}^m \mathcal{Y}_j)}$, $\mathbf{S}^{i}$ is PSD. Being sub-matrix of $\mathbf{S}^{i}$, $\mathbf{\tilde{L}}_{\mathcal{Y}_{i+1}}$ is also PSD. Hence, $P(i+1)$ holds. 

Therefore, for $i=1,\cdots,m$, $\mathbf{\tilde{L}}_{\mathcal{Y}_i}$ is PSD.
\end{proof}

\section{Appendix: Proof of Theorem \ref{thrm connection}}
For preparation, first I need to quote a result from \cite{kulesza2012determinantal}: the conditional kernel 
\begin{equation}\label{eq con kernel}
\left(\mathbf{L}\vert A^{in}\subseteq Y, A^{out}\cap Y=\emptyset\right) = \left( \left[ (\mathbf{L}_{\bar{A}^{out}}+\mathbf{I}_{\bar{A}^{in}})^{-1}\right]_{\bar{A}^{in}}\right)^{-1} - \mathbf{I}. 
\end{equation}
Next I need to use the following lemma:
\begin{lemma}\label{lm Lc inv}
$(\mathbf{L}_{\hat{C}_{1:i}}^{-1})_{\hat{C}_i}=\tilde{\mathbf{L}}_{\hat{C}_i}^{-1}$, for $i=1,...,m$, where $\tilde{\mathbf{L}}_{\hat{C}_i}$ is defined by (\ref{def: L tilde C}).
\end{lemma}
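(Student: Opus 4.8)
The plan is to prove the identity by induction on $i$, combining the block matrix inversion (Schur complement) formula with the almost block diagonal structure of $\mathbf{L}$. The base case $i=1$ is immediate: by definition (\ref{def: L tilde C}) we have $\tilde{\mathbf{L}}_{\hat{C}_1}=\mathbf{L}_{\hat{C}_1}$, and since $\hat{C}_{1:1}=\hat{C}_1$ both sides of the claim equal $\mathbf{L}_{\hat{C}_1}^{-1}$.

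For the inductive step I would partition $\mathbf{L}_{\hat{C}_{1:i}}$ as a two-by-two block matrix with $\mathbf{L}_{\hat{C}_{1:i-1}}$ in the top-left corner and $\mathbf{L}_{\hat{C}_i}$ in the bottom-right corner,
\begin{equation*}
\mathbf{L}_{\hat{C}_{1:i}}=\begin{bmatrix} \mathbf{L}_{\hat{C}_{1:i-1}} & \mathbf{L}_{\hat{C}_{1:i-1},\hat{C}_i} \\ \mathbf{L}_{\hat{C}_{1:i-1},\hat{C}_i}^T & \mathbf{L}_{\hat{C}_i}\end{bmatrix},
\end{equation*}
and invoke the standard fact that the bottom-right block of the inverse equals the inverse of the Schur complement of the top-left block, namely
\begin{equation*}
(\mathbf{L}_{\hat{C}_{1:i}}^{-1})_{\hat{C}_i}=\bigl(\mathbf{L}_{\hat{C}_i}-\mathbf{L}_{\hat{C}_{1:i-1},\hat{C}_i}^T\mathbf{L}_{\hat{C}_{1:i-1}}^{-1}\mathbf{L}_{\hat{C}_{1:i-1},\hat{C}_i}\bigr)^{-1}.
\end{equation*}

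The crux is to collapse the coupling term using sparsity. Because $\mathbf{L}$ is almost block diagonal, block $i$ connects only to its neighbours, so $\mathbf{L}_{\hat{C}_j,\hat{C}_i}=\mathbf{0}$ for every $j\leq i-2$; hence the tall column $\mathbf{L}_{\hat{C}_{1:i-1},\hat{C}_i}$ is nonzero only in its $\hat{C}_{i-1}$ rows. The quadratic form therefore sees only the $\hat{C}_{i-1}$-indexed diagonal block of $\mathbf{L}_{\hat{C}_{1:i-1}}^{-1}$, reducing to $\mathbf{L}_{\hat{C}_{i-1},\hat{C}_i}^T(\mathbf{L}_{\hat{C}_{1:i-1}}^{-1})_{\hat{C}_{i-1}}\mathbf{L}_{\hat{C}_{i-1},\hat{C}_i}$. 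Applying the inductive hypothesis $(\mathbf{L}_{\hat{C}_{1:i-1}}^{-1})_{\hat{C}_{i-1}}=\tilde{\mathbf{L}}_{\hat{C}_{i-1}}^{-1}$, the Schur complement becomes precisely the recursion (\ref{def: L tilde C}) that defines $\tilde{\mathbf{L}}_{\hat{C}_i}$, so $(\mathbf{L}_{\hat{C}_{1:i}}^{-1})_{\hat{C}_i}=\tilde{\mathbf{L}}_{\hat{C}_i}^{-1}$, closing the induction.

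I expect the main obstacle to be the bookkeeping in this collapse step: one must argue carefully that the sparsity pattern $\mathbf{L}_{\hat{C}_j,\hat{C}_i}=\mathbf{0}$ for $j\leq i-2$ genuinely permits replacing the full inverse $\mathbf{L}_{\hat{C}_{1:i-1}}^{-1}$ by only its bottom-right diagonal block inside the quadratic form, rather than by any coupling between the off-diagonal blocks of that inverse. Invertibility of every $\mathbf{L}_{\hat{C}_i}$, and hence of every Schur complement encountered, is guaranteed by the non-triviality assumption $\det(\mathbf{L}_{\hat{C}_i})>0$, with positive semi-definiteness supplied by Lemma \ref{lm apr}, so all the inverses written above are well-defined.
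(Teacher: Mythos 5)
Your proof is correct and follows essentially the same route as the paper's: induction on $i$, the Schur-complement formula for the bottom-right block of $\mathbf{L}_{\hat{C}_{1:i}}^{-1}$, the sparsity collapse of $\mathbf{L}_{\hat{C}_{1:i-1},\hat{C}_i}$ to its $\hat{C}_{i-1}$ rows, and the inductive hypothesis to recover the recursion defining $\tilde{\mathbf{L}}_{\hat{C}_i}$. The ``collapse'' step you flag as the main obstacle is exactly the implicit second equality in the paper's chain, and your justification of it is sound.
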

\begin{proof}
The proof is given by mathematical induction. 
When $n=1$, the result trivially holds: 
\begin{equation}
(\mathbf{L}_{\hat{C}_{1}}^{-1})_{\hat{C}_1}=\mathbf{L}_{\hat{C}_1}=\tilde{\mathbf{L}}_{\hat{C}_1}^{-1}.
\end{equation}
Assume the result holds for $n=i-1$, i.e.,
\begin{equation}
(\mathbf{L}_{\hat{C}_{1:i-1}}^{-1})_{\hat{C}_{i-1}}=\tilde{\mathbf{L}}_{\hat{C}_{i-1}}^{-1}.
\end{equation}
Consider the case when $n=i$. One has 
\begin{equation}
\begin{split}
&(\mathbf{L}_{\hat{C}_{1:i}}^{-1})_{\hat{C}_i}=(\mathbf{L}_{\hat{C}_i}-\mathbf{L}_{\hat{C}_{1:i-1},\hat{C}_i}^T \mathbf{L}_{\hat{C}_{1:i-1}}^{-1}\mathbf{L}_{\hat{C}_{1:i-1},\hat{C}_i})^{-1}\\
&=(\mathbf{L}_{\hat{C}_i}-\mathbf{L}_{\hat{C}_{i-1},\hat{C}_i}^T (\mathbf{L}_{\hat{C}_{1:i-1}}^{-1})_{\hat{C}_{i-1}}\mathbf{L}_{\hat{C}_{i-1},\hat{C}_i})^{-1}\\
&=(\mathbf{L}_{\hat{C}_i}-\mathbf{L}_{\hat{C}_{i-1},\hat{C}_i}^T \tilde{\mathbf{L}}_{\hat{C}_{i-1}}^{-1}\mathbf{L}_{\hat{C}_{i-1},\hat{C}_i})^{-1}=\tilde{\mathbf{L}}_{\hat{C}_i}^{-1}.
\end{split}
\end{equation}
Therefore the result holds for $i=1,...,m$.
\end{proof}

To prove Theorem \ref{thrm connection}, it suffices to show that 
\begin{equation}
\tilde{\mathbf{L}}_{\mathcal{Y}_i}=\left(\mathbf{L}_{\cup_{j=1}^i \mathcal{Y}_j} \vert \hat{C}_{1:i-1} \subseteq Y, \bar{\hat{C}}_{1:i-1}\cap Y=\emptyset\right). 
\end{equation}
Using (\ref{eq con kernel}) one has 
\begin{equation}
\begin{split}
& \left(\mathbf{L}_{\cup_{j=1}^i \mathcal{Y}_j} \vert \hat{C}_{1:i-1} \subseteq Y, \bar{\hat{C}}_{1:i-1}\cap Y=\emptyset\right)\\
&= \left( \left[ (\mathbf{L}_{\hat{C}_{1:i-1}\cup \mathcal{Y}_i}+\mathbf{I}_{\mathcal{Y}_i})^{-1}\right]_{\mathcal{Y}_i}\right)^{-1} - \mathbf{I}\\
&=\mathbf{L}_{\mathcal{Y}_i}-\mathbf{L}_{\hat{C}_{1:i-1},\mathcal{Y}_i}^T \mathbf{L}_{\hat{C}_{1:i-1}}^{-1}\mathbf{L}_{\hat{C}_{1:i-1},\mathcal{Y}_i}\\
&=\mathbf{L}_{\mathcal{Y}_i}-\mathbf{L}_{\hat{C}_{i-1},\mathcal{Y}_i}^T (\mathbf{L}_{\hat{C}_{1:i-1}}^{-1})_{\hat{C}_{i-1}}\mathbf{L}_{\hat{C}_{i-1},\mathcal{Y}_i}.
\end{split}
\end{equation}
Following Lemma \ref{lm Lc inv} to complete the proof 
\begin{equation}
RHS=\mathbf{L}_{\mathcal{Y}_i}-\mathbf{L}_{\hat{C}_{i-1},\mathcal{Y}_i}^T \tilde{\mathbf{L}}_{\hat{C}_{i-1}}^{-1} \mathbf{L}_{\hat{C}_{i-1},\mathcal{Y}_i}=\tilde{\mathbf{L}}_{\mathcal{Y}_i}.
\end{equation}

\bibliography{ref}
\bibliographystyle{aaai}
\end{document}